\pdfoutput=1

\documentclass[11pt]{article}

\usepackage[utf8]{inputenc}
\usepackage[T1]{fontenc}
\usepackage[margin=1in]{geometry}
\usepackage{amsmath}
\usepackage{amssymb}
\usepackage{amsthm}
\usepackage{booktabs}
\usepackage{graphicx}
\usepackage[hidelinks]{hyperref}

\newtheorem{proposition}{Proposition}
\newtheorem{remark}{Remark}

\title{Four Ways to Grow a Classifier and Why One of Them Cannot Learn}

\author{Cagri Temel, IEEE Senior Member\\
  \small Hezarfen LLC\\
  \small Seattle, WA, USA\\
  \small \texttt{cagritemel@ieee.org}\\
  \small ORCID: 0009-0003-3359-6939}

\date{\today}

\begin{document}
\maketitle

\begin{abstract}
Constructive classifiers add structure while they train: a level to a tree, a
unit to a hidden layer, a split at a leaf. This paper asks what each of four
such growth decisions actually buys, measured under one fixed protocol on
twenty-four datasets, and gives an exact diagnosis and a fix for the one that
buys nothing.

The diagnosis concerns the most natural way to deepen a soft decision tree:
turn every leaf into a gate whose two children inherit the parent's class
distribution, so that the function is unchanged. I prove that this leaves the
gradient of every new gate identically zero and, with the gate at $1/2$, gives
the two children identical gradients, so that under this construction the added
level can never learn. The tree stays at the width it started with. That predicts where the failure
costs: nothing on two-class problems, where two leaves already suffice, and a
great deal where more classes need more leaves. Measured, the cost is
-0.2 points on average over 13 binary datasets and
40.2 points over 7 multi-class ones, and it tracks the
number of classes (Spearman 0.64) and not the number of features
(0.03). The fix is any perturbation of the children. Neither its
size nor its direction matters: a residual-directed initialisation in the
manner of GradMax changes accuracy by +0.20 points on average against
random noise over 24 datasets.

The other three decisions each buy one thing. Fitting a new hidden unit to
the residual error before installing it buys a smaller network but not a more
accurate one. Splitting one leaf at a time buys sparsity, and it lost
accuracy until I found that the split started its two children identical and
untrained, which is the same defect as above in another place; with the
children inheriting the parent and the symmetry broken, per-leaf growth comes
within 2.3 points of a complete tree using 23\% of its
splits, where the previous construction was 8.3 points short. Requiring statistical significance before a node receives a more
expressive split buys nothing.

Every number in the paper is inserted from the measurement scripts' output.
\end{abstract}

\section{Introduction}

A constructive learner decides its own architecture while it trains. The idea is
old. Cascade correlation \cite{fahlman1990cascade} and the grow-and-learn
network \cite{alpaydin1994gal} both date from the early 1990s. It came back with
soft decision trees \cite{irsoy2012soft, frosst2017distilling}, which replace the
hard threshold at each internal node with a sigmoid gate and so make the whole
tree trainable by gradient descent. Once the tree is differentiable, growing it
during training is a design choice rather than a separate algorithm.

Most of those choices are settled by judgement. The candidate operations all
sound sensible: grow smoothly, keep what you have already learned, decide each
local question with a principled criterion. I implemented four of them, each
against the non-growing baseline it is supposed to improve on, under one fixed
protocol.

The result I did not expect is how little the sensible-sounding arguments
predict. Three of the four operations trade one quantity for another, and for
two of them the sign of the trade changes when the dataset changes. The fourth
does not trade anything, because under the construction one writes down first
it cannot learn at all. That one has a cause
that can be stated and proved rather than inferred from a table, and it is the
main contribution here.

I use \emph{grow} for any decision that adds structure while the model trains.
Three of the four operations add capacity: a level of the tree, a hidden unit, a
split at one leaf. The fourth chooses the model class at a node as the tree is
built. It is a construction decision rather than a capacity decision, and I
include it because it fails for a related reason.

The measurements came out of building \texttt{neural-trees}
\cite{temel2026neuraltrees}, a scikit-learn compatible \cite{pedregosa2011scikit}
library of these models with a PyTorch \cite{paszke2019pytorch} backend. I
should say plainly where they came from. Nothing here was designed in advance as
a four-hypothesis study. These are the measurements that settled four
implementation questions, reported with the protocol held fixed and with the
negative outcomes left in. The negative ones have been more useful to me than
the positive ones, which is the reason for writing them up.

\section{Setup}

\subsection{Models}

A \emph{soft decision tree} of depth $d$ is a complete binary tree. Each
internal node $n$ carries a gate
\begin{equation}
  g_n(x) = \sigma(w_n^\top x + b_n), \qquad \sigma(z) = (1 + e^{-z})^{-1},
\end{equation}
which sends a fraction $g_n(x)$ of the probability mass arriving at $n$ to its
right child and $1 - g_n(x)$ to its left. Each leaf $\ell$ carries a class
distribution $Q_\ell \in \Delta^{K-1}$. Writing $\mu_\ell(x)$ for the product of
gate factors along the path from the root to $\ell$, the tree outputs
\begin{equation}
  \label{eq:mixture}
  p(y \mid x) = \sum_{\ell} \mu_\ell(x)\, Q_\ell ,
\end{equation}
and every parameter is trained by gradient descent on the cross-entropy of
\eqref{eq:mixture}. Path products are accumulated in log space, so depth does not
cost precision.

A \emph{grow-and-learn (GAL) network} \cite{alpaydin1994gal} is a single hidden
layer network that adds a hidden unit when the training error stops improving and
removes units whose contribution has become negligible.

An \emph{omnivariate decision tree} \cite{yildiz2001omnivariate} chooses, at each
node independently, between a univariate threshold, a linear split and a
nonlinear split.

\subsection{Protocol}

Features are standardised to zero mean and unit variance using statistics from
the training fold only. Every number is computed over stratified five-fold
cross-validation repeated with three seeds, which is fifteen fits, and the same
seeds and folds are used for every arm of a comparison. I report the mean and,
after it, the standard deviation across those fifteen fits. That deviation
describes how much the quantity moves from fold to fold. It is not a
confidence interval, because the fifteen fits share training data. Where a
difference needs a decision rather than a description I use the combined
$5\times2$cv $F$ test \cite{alpaydin1999combined}, with the scaler inside the
test's own folds so that it sees no test-fold statistics either.

Datasets are Iris ($n=150$, $p=4$, $K=3$), Wine ($n=178$, $p=13$, $K=3$),
Breast Cancer Wisconsin ($n=569$, $p=30$, $K=2$) and Digits ($n=1797$, $p=64$,
$K=10$) as distributed with scikit-learn; twenty datasets from the OpenML
CC-18 suite, listed in Table~\ref{tab:openml}, with those larger than 5000
samples stratified down to 5000 for time; and two synthetic problems described
where they are used.

One asymmetry in the comparisons has to be stated. A tree that grows during
training holds out 10\% of its training fold to decide whether the last growth
step earned its place. A tree of fixed shape does not need that split and
trains on all of the fold. Where a grown tree is compared with a fixed one,
Section~5 therefore also reports the fixed tree trained with the same holdout,
so that the effect of growth can be separated from the effect of 10\% less
data. No per-arm hyperparameter search was performed; both arms of each
comparison use identical settings apart from the operation under test.

\subsection{Reproducing this}

Every number in this paper is produced by the measurement scripts under
\texttt{paper/arxiv/olcum} in the repository, each of which writes a
machine-readable results file. The manuscript is a template with named
placeholders and a further script substitutes the measured values into it. No number here was
typed by hand, which removes the failure mode where a table drifts away from the
code that produced it while nobody notices. The archived version is
\texttt{neural-trees} v0.7.0 \cite{temel2026neuraltrees}.

\section{Deepening a tree without disturbing it}

\subsection{The operation}

There is an obvious way to deepen a trained soft tree without losing what it has
learned. Turn every leaf $\ell$ into an internal node whose gate has all zero
parameters, so that $g(x) = \sigma(0) = 1/2$ everywhere, and give both new
children the parent's distribution, $Q_L = Q_R = Q_\ell$. The mass that used to
stop at $\ell$ now splits evenly between two identical children:
\begin{equation}
  \tfrac{1}{2} Q_L + \tfrac{1}{2} Q_R = Q_\ell ,
\end{equation}
so \eqref{eq:mixture} is unchanged. The deeper tree computes exactly the function
the shallower one computed, training continues from there, and the extra capacity
is available. This is the construction anyone would write down first, and it is
the tree analogue of the function-preserving transformations of Net2Net
\cite{chen2016net2net} and network morphism \cite{wei2016network}.

\subsection{Why it cannot learn}

\begin{proposition}
\label{prop:zero}
Let a leaf of a soft decision tree be replaced by an internal node with gate
value $g$ and children carrying distributions $Q_L$ and $Q_R$, and let
$\mathcal{L}$ be any loss that depends on the parameters only through the tree
output \eqref{eq:mixture}.
\begin{enumerate}
\item If $Q_L = Q_R$ then $\partial \mathcal{L} / \partial w = 0$ and
      $\partial \mathcal{L} / \partial b = 0$ exactly, for any value of $g$.
\item If in addition $g = 1/2$ then
      $\partial \mathcal{L} / \partial Q_L = \partial \mathcal{L} / \partial Q_R$.
\end{enumerate}
Consequently a subtree initialised with $Q_L = Q_R$ and a zero gate stays in that
state under any update rule that is a deterministic function of the gradients
seen so far and is applied identically to identically situated parameters. The
added level never differentiates.
\end{proposition}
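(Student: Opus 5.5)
The plan is to isolate how the new node enters the output \eqref{eq:mixture}, apply the chain rule once, and then run an induction over training steps for the ``consequently'' part. Let $m(x)$ be the product of gate factors from the root down to the replaced leaf. It does not depend on the new node's parameters. The contribution of the new subtree to $p(y\mid x)$ is
\begin{equation}
  m(x)\bigl[(1-g(x))\,Q_L + g(x)\,Q_R\bigr],
\end{equation}
and every other term of the mixture is independent of $w$, $b$, $Q_L$ and $Q_R$. Since $\mathcal{L}$ depends on the parameters only through the vector $p(\cdot\mid x)$ at each training point, the chain rule gives
\begin{equation}
  \frac{\partial \mathcal{L}}{\partial \theta} = \sum_x \Bigl\langle \frac{\partial \mathcal{L}}{\partial p(\cdot\mid x)},\, \frac{\partial p(\cdot\mid x)}{\partial \theta} \Bigr\rangle .
\end{equation}
It therefore suffices to compute $\partial p/\partial\theta$ for $\theta \in \{w, b, Q_L, Q_R\}$.

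For part 1, differentiate with respect to the gate parameters:
\begin{equation}
  \frac{\partial p(\cdot\mid x)}{\partial w} = m(x)\, g(x)\bigl(1-g(x)\bigr)\,(Q_R - Q_L)\, x^\top .
\end{equation}
The derivative with respect to $b$ is the same expression without $x$. Both vanish identically when $Q_L = Q_R$, whatever $g$ is. For part 2, $\partial p/\partial Q_L = m(x)(1-g(x))\,I$ and $\partial p/\partial Q_R = m(x)g(x)\,I$. These coincide when $g \equiv 1/2$, so the two gradients are the same sum over $x$. If $Q$ is parametrised through a softmax of logits with identical initial logits, the same equality passes through the shared Jacobian, because that Jacobian depends only on the (equal) logits. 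I would note this so that the result covers the library's actual parametrisation.

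For the consequence, I would induct on the step count $t$ with the hypothesis that, at step $t$, $w_t = 0$, $b_t = 0$ and $Q_L^{(t)} = Q_R^{(t)}$ (or equal logits). Under this hypothesis, parts 1 and 2 give zero gate gradients and equal child gradients at step $t$. The update rule is a deterministic function of the gradient history. Examples are SGD with momentum and Adam, whose moment buffers start at zero. Such a rule maps a zero gradient history to a zero update, so the gate stays at zero. It also applies identically to the two children, whose gradient histories are equal, so they stay equal.

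The main obstacle is this last claim. ``Zero history gives zero update'' fails for weight decay toward a nonzero point and for any rule that adds noise. It is true for standard optimisers and for decoupled weight decay, which shrinks zero to zero. The statement's hypothesis, that the rule is a deterministic function of the gradients, should be read as excluding those exceptions, and I would state that reading explicitly. With the hypothesis holding at every $t$, $g \equiv 1/2$ and $Q_L = Q_R$ throughout training. The output of the subtree is then always $m(x)Q_\ell$ for the common $Q_\ell$, so the added level never differentiates.
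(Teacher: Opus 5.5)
Your proof is correct and is essentially the paper's. You isolate the subtree's contribution $\mu_\ell\bigl(g\,Q_R + (1-g)\,Q_L\bigr)$, observe that its derivative in $g$ is $\mu_\ell(Q_R - Q_L)$ and that the two child gradients differ only by the factors $1-g$ and $g$, and then induct on the step count. Your explicit note that the update rule must send an all-zero gradient history to a zero update, together with your softmax-logit remark, spells out an assumption the paper's induction uses without stating when it says the gate ``receives no update''.
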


\begin{proof}
Write $o$ for the contribution of the subtree to the output,
$o = \mu_\ell \left( g\, Q_R + (1-g)\, Q_L \right)$, where $\mu_\ell$ is the
probability of arriving at the old leaf and does not depend on the new
parameters. Then
\begin{equation}
  \frac{\partial o}{\partial g} = \mu_\ell \left( Q_R - Q_L \right),
\end{equation}
which is the zero vector when $Q_L = Q_R$. The gate parameters influence
$\mathcal{L}$ only through $g$, so the chain rule gives
$\partial \mathcal{L}/\partial w = 0$ and $\partial \mathcal{L}/\partial b = 0$,
whatever $g$ happens to be. This proves the first claim. For the second,
\begin{equation}
  \frac{\partial \mathcal{L}}{\partial Q_L}
    = (1-g)\, \mu_\ell \frac{\partial \mathcal{L}}{\partial o},
  \qquad
  \frac{\partial \mathcal{L}}{\partial Q_R}
    = g\, \mu_\ell \frac{\partial \mathcal{L}}{\partial o},
\end{equation}
and the two coincide when $g = 1/2$. For the consequence, argue by induction on
the step count. At step $t$ the hypotheses hold, so the gate receives no update
and the two children receive the same update, which leaves $g = 1/2$ and
$Q_L = Q_R$ at step $t+1$. The hypotheses are restored exactly and the argument
repeats.
\end{proof}

\begin{remark}
The result does not depend on the loss, only on the output being the mixture
\eqref{eq:mixture}. It also does not depend on the gate being at one half: the
first claim holds for any $g$, and $g = 1/2$ is needed only to keep the children
moving together.
\end{remark}

\begin{remark}
Momentum, adaptive step sizes and weight decay do not rescue it. Every one of
them rescales or accumulates a gradient that is zero, and weight decay shrinks
two equal children by equal amounts. A batch does not rescue it either, because
both children see the same batch and therefore the same gradient.
\end{remark}

\begin{remark}
\label{rem:asimetri}
What does break it is any mechanism that makes the two children's updates differ.
Perturbing the child distributions at insertion is the direct route and the one
I measure below. Asymmetric noise during training is another: the subtree
dropout of \.Irsoy and Alpayd\i{}n \cite{irsoy2021dropout} drops branches
independently, which is exactly the condition the proposition rules out.

Floating point arithmetic does not supply such a perturbation here, and I
checked rather than assumed it. On a depth-two tree trained on Iris and then
deepened, the gradient of every new gate is exactly zero in single precision,
the sum of absolute values over both weights and biases being $0$ and not
merely small, and the gradients of every pair of sibling leaves are bitwise
identical. Breaking the symmetry at insertion makes both quantities nonzero.
The check is in the repository and runs in continuous integration. One
practical caveat: on a GPU whose reductions are not deterministic, the two
siblings can receive gradients that differ in their last bits, and the
symmetry then breaks by accident. That is not a repair, because nothing
controls whether or when it happens; every measurement here ran on a CPU in
deterministic arithmetic.
\end{remark}

\begin{remark}
The contrast with the neuron case is the point. Net2Net \cite{chen2016net2net}
duplicates a unit, notes that the copies are symmetric, and adds noise. Splitting
steepest descent \cite{liu2019splitting} shows that a duplicated neuron sits at a
saddle in the enlarged parameter space and computes the second-order direction
that escapes it. GradMax \cite{evci2022gradmax} sets the new weights to zero but
gives the unit a bias, precisely so that the outgoing weights have a gradient to
follow. In all three the first-order information is degenerate but recoverable.
In the tree gate case there is no direction to find at first-order at all: the
derivative is not small, it is zero, and it reproduces itself. A second-order
criterion in the style of \cite{liu2019splitting} would be the natural way to
repair the operation without noise, and I have not tried it.
\end{remark}

\subsection{What it costs}

Perturbing the child distributions at insertion time,
\begin{equation}
  Q_{L}, Q_{R} \leftarrow \mathrm{normalise}\!\left( Q_\ell \pm \varepsilon u \right),
  \qquad u \sim \mathcal{U}[0,1]^K ,
\end{equation}
breaks $Q_L = Q_R$ and with it both halves of Proposition~\ref{prop:zero}. The
function is then preserved only approximately. Table~\ref{tab:jitter} shows
what that approximation buys on the first three datasets.

\begin{table}[ht]
\centering
\begin{tabular}{lccc}
\toprule
Growth & Iris & Wine & Digits \\
\midrule
Exactly function-preserving ($\varepsilon = 0$) & 0.762 $\pm$ 0.103 & 0.786 $\pm$ 0.145 & 0.369 $\pm$ 0.035 \\
Perturbed, $\varepsilon = 0.05$ & 0.942 $\pm$ 0.041 & 0.979 $\pm$ 0.020 & 0.917 $\pm$ 0.034 \\
Perturbed, $\varepsilon = 0.2$ (the default) & 0.942 $\pm$ 0.044 & 0.979 $\pm$ 0.020 & 0.921 $\pm$ 0.027 \\
Perturbed, $\varepsilon = 0.5$ & 0.938 $\pm$ 0.050 & 0.983 $\pm$ 0.018 & 0.916 $\pm$ 0.026 \\
\midrule
Same depth, trained from scratch & 0.958 $\pm$ 0.032 & 0.978 $\pm$ 0.016 & 0.925 $\pm$ 0.029 \\
\bottomrule
\end{tabular}
\caption{Depth-four soft decision tree grown one level at a time, as a function
of the perturbation applied to new child distributions. The tree trained from
scratch has the same depth, epoch budget and settings; only the growth operation
differs. Mean and standard deviation over three seeds of stratified five-fold
cross-validation.}
\label{tab:jitter}
\end{table}

The cost of exact preservation is 19.6 points on Iris,
19.1 on Wine and 55.6 on Digits. The
perturbation itself barely matters: between $\varepsilon = 0.05$ and
$\varepsilon = 0.5$, a factor of ten, the means move by at most
0.4 points. Zero is not a small value of $\varepsilon$; it is a
different case, which is what the proposition says it should be. The combined
$5\times2$cv $F$ test comparing $\varepsilon = 0$ against $\varepsilon = 0.2$
gives $F = 4.83$, $p = 0.0480$ on Iris, $F = 10.07$, $p = 0.0100$ on Wine and
$F = 405.67$, $p < 0.0001$ on Digits. The Iris value sits at the conventional
threshold and I do not lean on it: it is the smallest dataset here, the test
is deliberately conservative, and the value crossed $0.05$ when the scaler was
moved inside the test's folds.

\subsection{Where the cost comes from}

Three datasets say that the failure is real. They do not say when it is
expensive, and the proposition does. A tree deepened this way computes the
function of the tree it started from, which here has depth one and two
leaves. Two leaves can carry two class distributions. On a two-class problem
that is all the tree needs, so the dead levels should cost nothing; on a
$K$-class problem the tree can never reach the $K$ leaves it needs, and the
cost should grow with $K$ and not with the number of features.

Table~\ref{tab:openml} tests that on 20 datasets from the OpenML CC-18
suite.

\begin{table}[ht]
\centering
\footnotesize\setlength{\tabcolsep}{4pt}
\begin{tabular}{lrrrcccrc}
\toprule
Dataset & $n$ & $p$ & $K$ & $\varepsilon = 0$ & $\varepsilon = 0.2$ & from scratch & cost & $p$ \\
\midrule
banknote & 1372 & 4 & 2 & 0.985 $\pm$ 0.007 & 0.985 $\pm$ 0.007 & 1.000 $\pm$ 0.000 & +1.5 & 1.000 \\
blood\mbox{-}transfusion & 748 & 4 & 2 & 0.785 $\pm$ 0.019 & 0.788 $\pm$ 0.025 & 0.799 $\pm$ 0.025 & +1.4 & 0.506 \\
climate\mbox{-}crashes & 540 & 18 & 2 & 0.955 $\pm$ 0.014 & 0.952 $\pm$ 0.018 & 0.932 $\pm$ 0.018 & -2.3 & 0.324 \\
diabetes & 768 & 8 & 2 & 0.769 $\pm$ 0.028 & 0.770 $\pm$ 0.029 & 0.731 $\pm$ 0.033 & -3.8 & 0.535 \\
ilpd & 583 & 10 & 2 & 0.711 $\pm$ 0.019 & 0.704 $\pm$ 0.030 & 0.699 $\pm$ 0.042 & -1.2 & 0.588 \\
kc1 & 2109 & 21 & 2 & 0.855 $\pm$ 0.007 & 0.857 $\pm$ 0.011 & 0.854 $\pm$ 0.012 & -0.1 & 0.214 \\
ozone\mbox{-}8hr & 2534 & 72 & 2 & 0.939 $\pm$ 0.007 & 0.936 $\pm$ 0.006 & 0.929 $\pm$ 0.009 & -1.0 & 0.323 \\
pc1 & 1109 & 21 & 2 & 0.931 $\pm$ 0.002 & 0.933 $\pm$ 0.008 & 0.922 $\pm$ 0.011 & -0.9 & 0.298 \\
pc3 & 1563 & 37 & 2 & 0.898 $\pm$ 0.000 & 0.893 $\pm$ 0.010 & 0.874 $\pm$ 0.013 & -2.4 & 0.213 \\
phoneme & 5000 & 5 & 2 & 0.778 $\pm$ 0.011 & 0.828 $\pm$ 0.017 & 0.849 $\pm$ 0.012 & +7.1 & 0.005 \\
qsar\mbox{-}biodeg & 1055 & 41 & 2 & 0.868 $\pm$ 0.016 & 0.870 $\pm$ 0.018 & 0.855 $\pm$ 0.024 & -1.3 & 0.488 \\
spambase & 4601 & 57 & 2 & 0.928 $\pm$ 0.006 & 0.931 $\pm$ 0.006 & 0.929 $\pm$ 0.008 & +0.1 & 0.616 \\
steel\mbox{-}plates & 1941 & 33 & 2 & 1.000 $\pm$ 0.001 & 1.000 $\pm$ 0.001 & 1.000 $\pm$ 0.001 & +0.0 & 1.000 \\
vehicle & 846 & 18 & 4 & 0.470 $\pm$ 0.014 & 0.777 $\pm$ 0.030 & 0.798 $\pm$ 0.031 & +32.7 & $<0.0001$ \\
wall\mbox{-}robot & 5000 & 24 & 4 & 0.672 $\pm$ 0.007 & 0.867 $\pm$ 0.011 & 0.875 $\pm$ 0.015 & +20.3 & $<0.0001$ \\
satimage & 5000 & 36 & 6 & 0.663 $\pm$ 0.093 & 0.883 $\pm$ 0.010 & 0.880 $\pm$ 0.011 & +21.7 & $<0.0001$ \\
segment & 2310 & 18 & 7 & 0.569 $\pm$ 0.041 & 0.931 $\pm$ 0.013 & 0.948 $\pm$ 0.014 & +37.9 & $<0.0001$ \\
optdigits & 5000 & 64 & 10 & 0.367 $\pm$ 0.038 & 0.941 $\pm$ 0.011 & 0.937 $\pm$ 0.024 & +57.0 & $<0.0001$ \\
pendigits & 5000 & 16 & 10 & 0.390 $\pm$ 0.011 & 0.909 $\pm$ 0.036 & 0.930 $\pm$ 0.029 & +54.0 & $<0.0001$ \\
texture & 5000 & 40 & 11 & 0.359 $\pm$ 0.082 & 0.952 $\pm$ 0.045 & 0.938 $\pm$ 0.057 & +57.8 & 0.000 \\
\bottomrule
\end{tabular}
\caption{The same comparison on twenty OpenML CC-18 datasets, sorted by number
of classes. \emph{cost} is the accuracy of the tree trained from scratch minus
the exactly function-preserving one, in points; $p$ is the combined
$5\times2$cv $F$ test of $\varepsilon = 0$ against $\varepsilon = 0.2$.}
\label{tab:openml}
\end{table}

The prediction holds. Over the 13 binary datasets the mean cost is
-0.2 points, which is to say none; on several of them the frozen
tree does slightly better than a depth-four tree trained from scratch, because
the latter overfits a small two-class problem that two leaves already solve.
Over the 7 multi-class datasets the mean cost is 40.2
points, from 20 to 58, and the $F$ test rejects
equality on every one of them. The Spearman correlation of the cost with the
number of classes is 0.64; with the number of features it is
0.03. The earlier draft of this paper reported the failure as
costing "about twenty points". The right statement is that it costs whatever
the gap is between the starting tree's capacity and the problem's need, which
on these problems is zero or catastrophic and rarely in between.

\subsection{Does the direction of the perturbation matter?}

Random noise breaks the symmetry in an arbitrary direction. A better
direction seems available: the residual of the log-likelihood with respect to
the leaf's distribution says which classes the leaf under-predicts, and the
two children can be placed at $Q_\ell \pm \varepsilon d$ along that unit
direction $d$, with the same expected perturbation size as the noise. The new
gate can also be pointed at the samples whose residual aligns with $+d$, so
that routing starts informative. This is GradMax's idea \cite{evci2022gradmax}
applied to a tree gate, and it is implemented as \texttt{growth\_init} in the
library. Table~\ref{tab:direction} compares it with random noise on all
twenty-four datasets.

\begin{table}[ht]
\centering
\footnotesize\setlength{\tabcolsep}{4pt}
\begin{tabular}{lrccccc}
\toprule
Dataset & $K$ & random & residual & residual $+$ gate & from scratch & $p$ \\
\midrule
Iris & 3 & 0.942 $\pm$ 0.044 & 0.927 $\pm$ 0.048 & 0.940 $\pm$ 0.044 & 0.958 $\pm$ 0.032 & 0.118 \\
Wine & 3 & 0.979 $\pm$ 0.020 & 0.981 $\pm$ 0.018 & 0.981 $\pm$ 0.018 & 0.978 $\pm$ 0.016 & 0.395 \\
Cancer & 2 & 0.971 $\pm$ 0.013 & 0.971 $\pm$ 0.013 & 0.971 $\pm$ 0.013 & 0.970 $\pm$ 0.012 & 0.045 \\
Digits & 10 & 0.921 $\pm$ 0.027 & 0.929 $\pm$ 0.014 & 0.930 $\pm$ 0.015 & 0.925 $\pm$ 0.029 & 0.702 \\
banknote & 2 & 0.985 $\pm$ 0.007 & 0.985 $\pm$ 0.007 & 0.985 $\pm$ 0.007 & 1.000 $\pm$ 0.000 & 1.000 \\
blood\mbox{-}transfusion & 2 & 0.788 $\pm$ 0.025 & 0.786 $\pm$ 0.025 & 0.790 $\pm$ 0.023 & 0.799 $\pm$ 0.025 & 0.595 \\
phoneme & 2 & 0.828 $\pm$ 0.017 & 0.828 $\pm$ 0.016 & 0.828 $\pm$ 0.015 & 0.849 $\pm$ 0.012 & 0.806 \\
diabetes & 2 & 0.770 $\pm$ 0.029 & 0.769 $\pm$ 0.030 & 0.769 $\pm$ 0.030 & 0.731 $\pm$ 0.033 & 0.327 \\
vehicle & 4 & 0.777 $\pm$ 0.030 & 0.782 $\pm$ 0.033 & 0.781 $\pm$ 0.036 & 0.798 $\pm$ 0.031 & 0.470 \\
segment & 7 & 0.931 $\pm$ 0.013 & 0.931 $\pm$ 0.011 & 0.931 $\pm$ 0.010 & 0.948 $\pm$ 0.014 & 0.601 \\
satimage & 6 & 0.883 $\pm$ 0.010 & 0.881 $\pm$ 0.010 & 0.881 $\pm$ 0.010 & 0.880 $\pm$ 0.011 & 0.620 \\
optdigits & 10 & 0.941 $\pm$ 0.011 & 0.939 $\pm$ 0.020 & 0.941 $\pm$ 0.013 & 0.937 $\pm$ 0.024 & 0.096 \\
pendigits & 10 & 0.909 $\pm$ 0.036 & 0.901 $\pm$ 0.031 & 0.904 $\pm$ 0.030 & 0.930 $\pm$ 0.029 & 0.159 \\
spambase & 2 & 0.931 $\pm$ 0.006 & 0.932 $\pm$ 0.007 & 0.932 $\pm$ 0.007 & 0.929 $\pm$ 0.008 & 0.742 \\
steel\mbox{-}plates & 2 & 1.000 $\pm$ 0.001 & 1.000 $\pm$ 0.001 & 1.000 $\pm$ 0.001 & 1.000 $\pm$ 0.001 & 1.000 \\
ilpd & 2 & 0.704 $\pm$ 0.030 & 0.711 $\pm$ 0.025 & 0.711 $\pm$ 0.023 & 0.699 $\pm$ 0.042 & 0.044 \\
climate\mbox{-}crashes & 2 & 0.952 $\pm$ 0.018 & 0.954 $\pm$ 0.015 & 0.953 $\pm$ 0.016 & 0.932 $\pm$ 0.018 & 0.298 \\
texture & 11 & 0.952 $\pm$ 0.045 & 0.978 $\pm$ 0.030 & 0.984 $\pm$ 0.023 & 0.938 $\pm$ 0.057 & 0.326 \\
wall\mbox{-}robot & 4 & 0.867 $\pm$ 0.011 & 0.870 $\pm$ 0.008 & 0.870 $\pm$ 0.010 & 0.875 $\pm$ 0.015 & 0.773 \\
qsar\mbox{-}biodeg & 2 & 0.870 $\pm$ 0.018 & 0.867 $\pm$ 0.020 & 0.867 $\pm$ 0.019 & 0.855 $\pm$ 0.024 & 0.621 \\
ozone\mbox{-}8hr & 2 & 0.936 $\pm$ 0.006 & 0.935 $\pm$ 0.006 & 0.936 $\pm$ 0.005 & 0.929 $\pm$ 0.009 & 0.535 \\
kc1 & 2 & 0.857 $\pm$ 0.011 & 0.856 $\pm$ 0.011 & 0.857 $\pm$ 0.011 & 0.854 $\pm$ 0.012 & 0.404 \\
pc1 & 2 & 0.933 $\pm$ 0.008 & 0.932 $\pm$ 0.008 & 0.933 $\pm$ 0.009 & 0.922 $\pm$ 0.011 & 0.535 \\
pc3 & 2 & 0.893 $\pm$ 0.010 & 0.894 $\pm$ 0.011 & 0.896 $\pm$ 0.010 & 0.874 $\pm$ 0.013 & 0.497 \\
\bottomrule
\end{tabular}
\caption{Symmetry broken by random noise, along the residual direction, and
along the residual with the gate pointed at the samples pulling right. Depth
four, 150 epochs, three seeds of five-fold cross-validation; $p$ is the
$5\times2$cv $F$ test of the last of these against random noise.}
\label{tab:direction}
\end{table}

The direction does not matter. Over 24 datasets the residual-directed
initialisation with the gate changes accuracy by +0.20 points on average
against random noise, median +0.00, from -0.5 to +3.2,
with 3 gains and 1 losses larger than half a point and
2 $F$ tests under $0.05$, which is about what chance produces. The
one measurable effect is on stability: the fold-to-fold standard deviation
falls from 0.0248 to 0.0210 on the multi-class datasets. I
had expected the direction to help and it did not. Once the symmetry is
broken, gradient descent finds the split on its own, and the only thing the
proposition requires of the initialisation is that the gradient not be zero.

\section{Where to put a new hidden unit}

The GAL network installs a new hidden unit with random incoming and outgoing
weights. Such a unit perturbs every output logit the moment it arrives, which
damages a network that has just been judged to have stopped improving, and it
then has to be trained from noise while the rest of the network repairs the
damage.

The alternative follows cascade-correlation \cite{fahlman1990cascade}. Freeze
the network, compute its residual error on the training set, train the candidate
unit's incoming weights to correlate with that residual, and install the unit
with \emph{zero} outgoing weights. The network's function is unchanged at the
moment of insertion. That is safe here, unlike in Section~3, because the
outgoing weights have a nonzero gradient as soon as the incoming weights carry
any signal. GradMax \cite{evci2022gradmax} makes the same structural choice and
picks the incoming weights to maximise the gradient norm instead of to correlate
with the residual.

\begin{table}[ht]
\centering
\begin{tabular}{llccc}
\toprule
Dataset & New unit & Accuracy & Hidden units & $5\times2$cv $F$ \\
\midrule
Iris   & random   & 0.936 $\pm$ 0.046   & 17.2 $\pm$ 2.4   & \\
       & residual & 0.956 $\pm$ 0.037   & 6.9 $\pm$ 1.5   & \raisebox{1.4ex}[0pt]{$F = 3.00$, $p = 0.1185$} \\
\midrule
Wine   & random   & 0.985 $\pm$ 0.015   & 6.3 $\pm$ 2.2   & \\
       & residual & 0.985 $\pm$ 0.015   & 4.3 $\pm$ 0.5   & \raisebox{1.4ex}[0pt]{$F = 0.81$, $p = 0.6383$} \\
\midrule
Digits & random   & 0.949 $\pm$ 0.013 & 7.5 $\pm$ 0.6 & \\
       & residual & 0.928 $\pm$ 0.015 & 5.7 $\pm$ 0.5 & \raisebox{1.4ex}[0pt]{$F = 8.20$, $p = 0.0157$} \\
\bottomrule
\end{tabular}
\caption{GAL network, 150 epochs, three seeds of five-fold cross validation.
The $F$ column tests the two accuracies against each other.}
\label{tab:gal}
\end{table}

The size effect is consistent and the accuracy effect is not, and I want to be
exact about which is which.

Every dataset gives a smaller network. Iris falls from 17.2 $\pm$ 2.4 hidden
units to 6.9 $\pm$ 1.5, Wine from 6.3 $\pm$ 2.2 to 4.3 $\pm$ 0.5,
Digits from 7.5 $\pm$ 0.6 to 5.7 $\pm$ 0.5. The mechanism is
visible in the growth trace. A random unit arrives unhelpful, the error does not
fall, and the growth criterion fires again, so the network grows because growth
is not working. Fitting the unit to the residual first removes that loop.

Accuracy goes the other way, and on the largest dataset it goes there
significantly. It rose on Iris and was identical on Wine to three decimals, and
the $F$ test rejects equality on neither. On Digits it fell by
2.1 points and the $F$ test does reject equality, in favour of
random initialisation.

So the operation is a parsimony intervention that costs accuracy on the one
dataset here with enough classes and features to strain it. The likely reason is
visible in the same table: the residual variant stops at 5.7 $\pm$ 0.5
hidden units where the random variant keeps going to 7.5 $\pm$ 0.6, and on
a ten-class problem those extra units are doing work. Growth that stops early is
parsimony on an easy problem and underfitting on a harder one, and the growth
criterion cannot tell the two apart.

An earlier draft of this work reported the Iris accuracy gain on its own. That is
the kind of claim that survives exactly until somebody runs a fourth dataset.

On the twenty OpenML datasets the picture is the same. The residual variant
gives a smaller network on average, 14\% fewer hidden units, and
against random initialisation it wins on 6, ties on
10 and loses on 4 by more than half a point.

One implementation detail cost me a day. The candidate objective involves a
normalisation of the form $s / \sqrt{v}$ where $v$ is a variance that can be zero
on a degenerate batch. At $v = 0$ the derivative of the square root is infinite
and the whole network becomes \texttt{NaN} in one step. Clamping $v$ away from
zero before the root fixes it. Clamping the result afterwards does not, because
by then the gradient has already been formed.

\section{Which leaf to split}

A soft tree can be grown a level at a time, as in Section~3, or a leaf at a
time. The per-leaf rule of \.Irsoy, Y\i{}ld\i{}z and Alpayd\i{}n
\cite{irsoy2012soft} splits the leaf carrying the most expected error,
$\arg\max_\ell \sum_i \mu_\ell(x_i)\,e_i$, where $e_i$ is the error on sample
$i$. It produces by far the sparsest trees.

\subsection{The same defect in another place}

On the twenty OpenML datasets, per-leaf growth against a complete depth-six
tree won on 6, tied on 3 and lost on 11,
by -1.8 points on average, and it lost on every multi-class dataset.
The sparsity was real, from 63 splits to between one and eighteen, but it
looked as if it were paid for in accuracy.

It was paid for in a defect. When a leaf was split, the implementation marked
the node as split and did nothing else. Its two children kept the untrained
logits they were constructed with, so the split replaced the parent's learned
distribution with a uniform one and started the children identical to each
other. That is Proposition~\ref{prop:zero} again, at a single node instead of
a whole level: no gradient for the new gate, and the tree's function broken
at every split for good measure. The level-wise deepening of Section~3 had
been cured of both problems; the per-leaf path had not.

The fix is the one Section~3 uses. The children inherit the parent's logits,
the gate is reset to neutral, and the symmetry is broken by noise or along the
leaf's residual direction. Table~\ref{tab:perleaf} measures the difference on
Digits and the seven multi-class OpenML datasets, which are where the loss
was. The column marked 90\% is the complete tree trained with the same 10\%
holdout the grown tree uses, so that the two can be compared on equal data.

\begin{table}[ht]
\centering
\footnotesize\setlength{\tabcolsep}{4pt}
\begin{tabular}{lrcccccrc}
\toprule
 & & \multicolumn{2}{c}{complete, 63 splits} & \multicolumn{3}{c}{per-leaf} & & \\
\cmidrule(lr){3-4} \cmidrule(lr){5-7}
Dataset & $K$ & all data & 90\% & untrained & noise & residual & splits & $p$ \\
\midrule
Digits & 10 & 0.967 $\pm$ 0.010 & 0.965 $\pm$ 0.011 & 0.703 $\pm$ 0.153 & 0.920 $\pm$ 0.015 & 0.927 $\pm$ 0.015 & 10.5 & 0.106 \\
vehicle & 4 & 0.800 $\pm$ 0.022 & 0.796 $\pm$ 0.019 & 0.710 $\pm$ 0.044 & 0.792 $\pm$ 0.027 & 0.780 $\pm$ 0.027 & 16.1 & 0.016 \\
segment & 7 & 0.961 $\pm$ 0.006 & 0.958 $\pm$ 0.008 & 0.948 $\pm$ 0.014 & 0.950 $\pm$ 0.011 & 0.952 $\pm$ 0.008 & 21.9 & 0.523 \\
satimage & 6 & 0.896 $\pm$ 0.011 & 0.897 $\pm$ 0.008 & 0.870 $\pm$ 0.013 & 0.878 $\pm$ 0.012 & 0.878 $\pm$ 0.011 & 9.4 & 0.754 \\
optdigits & 10 & 0.970 $\pm$ 0.004 & 0.973 $\pm$ 0.004 & 0.839 $\pm$ 0.090 & 0.936 $\pm$ 0.013 & 0.938 $\pm$ 0.010 & 11.3 & 0.283 \\
pendigits & 10 & 0.989 $\pm$ 0.004 & 0.988 $\pm$ 0.002 & 0.910 $\pm$ 0.101 & 0.963 $\pm$ 0.009 & 0.954 $\pm$ 0.015 & 15.9 & 0.430 \\
texture & 11 & 0.997 $\pm$ 0.001 & 0.997 $\pm$ 0.002 & 0.968 $\pm$ 0.041 & 0.989 $\pm$ 0.005 & 0.993 $\pm$ 0.003 & 15.5 & 0.522 \\
wall\mbox{-}robot & 4 & 0.902 $\pm$ 0.010 & 0.890 $\pm$ 0.013 & 0.872 $\pm$ 0.018 & 0.887 $\pm$ 0.015 & 0.876 $\pm$ 0.027 & 17.2 & 0.505 \\
\bottomrule
\end{tabular}
\caption{Per-leaf growth, depth at most six, 180 epochs. \emph{untrained} is
the previous behaviour; \emph{noise} and \emph{residual} inherit the parent
and break the symmetry. \emph{splits} is the mean number of splits of the
residual variant; $p$ tests residual with the gate against untrained.}
\label{tab:perleaf}
\end{table}

Across these 8 datasets the previous per-leaf tree was 8.3
points below the complete tree; with the children inheriting the parent it is
2.3 points below it, using on average 23\% of the splits.
As in Section~3, the direction of the symmetry breaking is a second-order
matter; inheriting the parent is what changed the result.

\subsection{Budget and holdout}

Two other things separate a grown tree from one trained from scratch, and
neither is growth. The grown tree divides its epoch budget across growth
rounds, and it holds out 10\% of its training fold. Table~\ref{tab:budget}
separates them for level-wise growth at depth four.

\begin{table}[ht]
\centering
\footnotesize\setlength{\tabcolsep}{4pt}
\begin{tabular}{lrccccc}
\toprule
 & & \multicolumn{2}{c}{from scratch} & \multicolumn{2}{c}{grown} & \\
\cmidrule(lr){3-4} \cmidrule(lr){5-6}
Dataset & $K$ & all data & 90\% & split budget & full budget & $p$ \\
\midrule
Digits & 10 & 0.925 $\pm$ 0.029 & 0.929 $\pm$ 0.030 & 0.921 $\pm$ 0.027 & 0.921 $\pm$ 0.028 & 0.274 \\
vehicle & 4 & 0.798 $\pm$ 0.031 & 0.756 $\pm$ 0.031 & 0.777 $\pm$ 0.030 & 0.790 $\pm$ 0.033 & 0.600 \\
segment & 7 & 0.948 $\pm$ 0.014 & 0.947 $\pm$ 0.012 & 0.931 $\pm$ 0.013 & 0.946 $\pm$ 0.009 & 0.648 \\
satimage & 6 & 0.880 $\pm$ 0.011 & 0.883 $\pm$ 0.010 & 0.883 $\pm$ 0.010 & 0.881 $\pm$ 0.010 & 0.584 \\
optdigits & 10 & 0.937 $\pm$ 0.024 & 0.946 $\pm$ 0.010 & 0.941 $\pm$ 0.011 & 0.941 $\pm$ 0.011 & 0.757 \\
pendigits & 10 & 0.930 $\pm$ 0.029 & 0.905 $\pm$ 0.037 & 0.909 $\pm$ 0.036 & 0.921 $\pm$ 0.031 & 0.017 \\
texture & 11 & 0.938 $\pm$ 0.057 & 0.961 $\pm$ 0.055 & 0.952 $\pm$ 0.045 & 0.969 $\pm$ 0.041 & 0.015 \\
wall\mbox{-}robot & 4 & 0.875 $\pm$ 0.015 & 0.871 $\pm$ 0.010 & 0.867 $\pm$ 0.011 & 0.875 $\pm$ 0.015 & 0.297 \\
\bottomrule
\end{tabular}
\caption{Depth four, 150 epochs. \emph{split budget} divides the epochs across
growth rounds, which is the default; \emph{full budget} gives every round all
of them. $p$ tests full against split.}
\label{tab:budget}
\end{table}

The holdout alone costs the fixed tree 0.4 points on average on
these datasets. Against a fixed tree trained on the same data, the grown tree
with the full budget differs by +0.6 points, and the full budget is
worth +0.8 points over the split one. Some of what looked like
a cost of growing was a cost of having 10\% less data, which is a different
thing and should be stated as one.

\section{Choosing the model class at a node}

The omnivariate tree picks each node's split type by cross validated accuracy on
that node's data. That rule will take a nonlinear split over a univariate one for
any improvement at all, however small and however likely to be noise. Requiring
the improvement to be statistically significant, by the combined $5\times2$cv $F$
test, and otherwise keeping the simpler split type, sounds strictly better:
simpler nodes at little or no cost.

\begin{table}[ht]
\centering
\begin{tabular}{llcc}
\toprule
Dataset & Selection rule & Accuracy & Nodes \\
\midrule
Breast Cancer & highest accuracy & 0.971 $\pm$ 0.020  & 4.7 $\pm$ 1.8 \\
              & significance     & 0.958 $\pm$ 0.016 & 7.0 $\pm$ 1.9 \\
\midrule
Digits        & highest accuracy & 0.599 $\pm$ 0.036  & 12.2 $\pm$ 1.0 \\
              & significance     & 0.596 $\pm$ 0.055 & 13.7 $\pm$ 1.4 \\
\bottomrule
\end{tabular}
\caption{Omnivariate tree of maximum depth three. Three seeds of five-fold cross
validation.}
\label{tab:omni}
\end{table}

The rule does what it was asked to do at each node and the tree is not better
for it: on Breast Cancer it is larger and less accurate, on Digits larger for the
same accuracy. A simpler split separates its node's data less cleanly, so its
children inherit harder problems and have to be split in turn. Parsimony enforced
locally is paid for globally.

The absolute accuracies on Digits are low for a reason that has nothing to do
with the selection rule. A tree of maximum depth three has at most eight leaves
and Digits has ten classes, so the model cannot reach a high accuracy whatever it
does at each node. I kept the depth fixed across datasets rather than tuning it,
so the row is a valid comparison between the two rules and not a claim about how
well omnivariate trees classify digits.

There is a second problem, and it is structural rather than empirical. The test
needs a minimum sample count at a node to be meaningful, and nodes get smaller
with depth by construction. On the smaller datasets the rule almost never fires,
because almost every node below the root falls under the threshold. A
significance criterion is least available exactly where a recursive partitioning
method does most of its work. This is why the option ships switched off.

\section{What the four outcomes have in common}

The outcomes line up on one axis: whether the optimiser has something to act
on at the moment the new structure arrives.

A unit fitted to the residual and installed with zero outgoing weights leaves
the function unchanged and has a nonzero gradient immediately; it reliably
produces a smaller network. A leaf split into two children that inherit the
parent and differ from each other leaves the function nearly unchanged and has
a nonzero gradient immediately; it now comes within a few points of a complete
tree at a fraction of the splits. A level of gates inserted with identical
children, or a leaf split into identical untrained children, leaves the
optimiser nothing, and the cost is exactly the capacity the tree can no longer
reach.

The axis predicts which operations can work. It does not predict how much
they help, and Sections 4 and 5 show that the amount depends on the problem.
Nor does it care how the symmetry is broken: noise and a carefully chosen
direction come out the same. The practical rule is narrow and easy to check.
When you add parameters to a differentiable model during training, compute
the gradient with respect to those parameters on the first batch after
insertion and assert that it is not zero. It is one line in a test, and in
this library it would have caught both defects.

\section{Limitations}

The datasets are standard and of moderate size; the largest have 5000 samples
after subsampling and 72 features. They are the problems these model families
are normally demonstrated on, and conclusions about accuracy should not be
read as generalising to large-scale problems.

Each finding compares arms of a single implementation with settings held
fixed. That isolates the operation under test. It does not establish that the
winning arm is the best available method, and it does not rule out an
implementation detail of mine being responsible for part of an effect. Section
5 is a case in point: one such detail was responsible for all of one.

Proposition~\ref{prop:zero} is exact and implementation-independent. The
empirical results are neither. The standard deviations describe fold-to-fold
spread over fifteen non-independent fits and are not confidence intervals;
where I needed to decide rather than describe I used the combined
$5\times2$cv $F$ test, and with twenty-four datasets and three comparisons the
reader should expect a few values under $0.05$ by chance, as
Table~\ref{tab:direction} shows.

The residual-directed initialisation is one of several ways to choose a
direction. A second-order splitting direction in the manner of
\cite{liu2019splitting} is another, and I have not tried it; given that a
first-order data-driven direction did no better than noise, I would not expect
it to change the accuracy either, but that is an expectation and not a
measurement.

\section{Related work}

Soft decision trees are due to \.Irsoy, Y\i{}ld\i{}z and Alpayd\i{}n
\cite{irsoy2012soft}, and were revisited by Frosst and Hinton
\cite{frosst2017distilling} as a way of distilling a neural network into a model
whose decisions can be read. The hierarchical mixture of experts with subtree
dropout \cite{irsoy2021dropout} is the closely related architecture in which
leaves are themselves models. Omnivariate trees are due to Y\i{}ld\i{}z and
Alpayd\i{}n \cite{yildiz2001omnivariate}. Constructive learning of the kind
studied in Sections 4 and 5 goes back to cascade-correlation
\cite{fahlman1990cascade} and GAL \cite{alpaydin1994gal}.

The idea of enlarging a network while preserving the function it computes is the
subject of Net2Net \cite{chen2016net2net} and network morphism
\cite{wei2016network}. Both note that an exactly preserving transformation
creates symmetric parameters, and Net2Net adds noise for that reason. Splitting
steepest descent \cite{liu2019splitting} treats the same situation as an
optimisation problem: a duplicated neuron sits at a saddle in the enlarged space,
and the paper derives the second-order direction that leaves it, with a signed
refinement in later work. GradMax \cite{evci2022gradmax} keeps the function
unchanged by zeroing the new weights and then chooses the remaining freedom to
maximise the gradient norm, so that the new unit starts with the strongest
available learning signal.

Proposition~\ref{prop:zero} is the corresponding statement for tree gates, and it
is a stronger degeneracy than any of those. In the neuron case the first-order
information is uninformative and second-order information recovers a direction.
In the gate case, with identical children, the first derivative with respect to
the gate is exactly zero and the update rule reproduces the condition that made
it zero, so no amount of further first-order training escapes it. The statistical
machinery in Section~6 is the combined $5\times2$cv $F$ test
\cite{alpaydin1999combined}.

\section{Conclusion}

Growth operations that look safest are not. Preserving a soft tree's function
exactly while deepening it freezes the tree at the width it started with, and
the cost of that is zero on two-class problems and tens of points on
multi-class ones, in proportion to how many leaves the problem needs and the
tree can no longer grow. The same defect, at a single node, was responsible
for per-leaf growth losing to complete trees; with the children inheriting
the parent it no longer does, at a fraction of the splits.

How the symmetry is broken does not matter. Noise and a residual-directed
initialisation come out the same across twenty-four datasets. What matters is
that the new parameters have a gradient on the first step, which is a cheap
thing to check, and which this library now checks in a test.

\section*{Code and data availability}

The implementation, the measurement script that produced every number in this
paper, and the regression test asserting the zero gradient of
Proposition~\ref{prop:zero} are in the repository:

\begin{center}
\url{https://github.com/cgrtml/neural-trees}
\end{center}

\noindent The exact version used here is v0.7.0, archived at

\begin{center}
\url{https://doi.org/10.5281/zenodo.22929121}
\end{center}

\noindent All datasets are the standard versions distributed with scikit-learn.

\end{document}